\newcommand{\set}[1]{\left\{#1\right\}}
\newcommand{\floor}[1]{\left\lfloor#1\right\rfloor}
\newcommand{\fpr}[1]{\mathopen{}\left(#1\right)}
\newcommand{\abs}[1]{{\left|#1\right|}}
\newcommand{\define}{\leftarrow}
\DeclareRobustCommand{\dispfunc}[2]{%
  \ensuremath{%
  \ifthenelse{\equal{#2}{}}%
    {\mathit{#1}}%
    {\mathit{#1}\fpr{#2}}}}
\newcommand{\bigO}[1]{\dispfunc{\mathcal{O}}{#1}}
\newcommand{\leftchild}[1]{\dispfunc{left}{#1}}
\newcommand{\rightchild}[1]{\dispfunc{right}{#1}}
\newcommand{\score}[1]{\dispfunc{s}{#1}}
\newcommand{\poslab}[1]{\dispfunc{p}{#1}}
\newcommand{\neglab}[1]{\dispfunc{n}{#1}}
\newcommand{\accpos}[1]{\dispfunc{accpos}{#1}}
\newcommand{\accneg}[1]{\dispfunc{accneg}{#1}}
\newcommand{\pel}[1]{\dispfunc{prev}{#1}}
\newcommand{\nel}[1]{\dispfunc{next}{#1}}
\newcommand{\nextpos}[1]{\dispfunc{nextpos}{#1}}
\newcommand{\prevpos}[1]{\dispfunc{prevpos}{#1}}
\newcommand{\nextneg}[1]{\dispfunc{nextneg}{#1}}
\newcommand{\prevneg}[1]{\dispfunc{prevneg}{#1}}
\newcommand{\gappos}[1]{\dispfunc{gp}{#1}}
\newcommand{\gapneg}[1]{\dispfunc{gn}{#1}}
\newcommand{\nextcntpos}[1]{\dispfunc{nextcntpos}{#1}}
\newcommand{\prevcntpos}[1]{\dispfunc{prevcntpos}{#1}}
\newcommand{\nextcntneg}[1]{\dispfunc{nextcntneg}{#1}}
\newcommand{\prevcntneg}[1]{\dispfunc{prevcntneg}{#1}}
\newcommand{\headneg}[1]{\dispfunc{hn}{#1}}
\newcommand{\headpos}[1]{\dispfunc{hp}{#1}}
\newcommand{\aucP}{\mathit{C}}
\newcommand{\treeP}{\mathit{TP}}
\newcommand{\Wtilde}[1]{\stackrel{\sim}{\smash{#1}\rule{0pt}{0.5ex}}}
\newcommand{\auc}{\mathit{auc}}
\newcommand{\apauc}{\Wtilde{\mathit{auc}}}
\newcommand{\approxauc}{\textsc{ApproxAUC}\xspace}
\newcommand{\addlist}{\textsc{Add}\xspace}
\newcommand{\removelist}{\textsc{Remove}\xspace}
\newcommand{\cumstats}{\textsc{HeadStats}\xspace}
\newcommand{\maxpos}{\textsc{MaxPos}\xspace}
\newcommand{\addtreepos}{\textsc{AddTreePos}\xspace}
\newcommand{\removetreepos}{\textsc{RemoveTreePos}\xspace}
\newcommand{\addtreeneg}{\textsc{AddTreeNeg}\xspace}
\newcommand{\removetreeneg}{\textsc{RemoveTreeNeg}\xspace}
\newcommand{\addcomppos}{\textsc{AddNext}\xspace}
\newcommand{\compresspos}{\textsc{Compress}\xspace}
\newcommand{\addpos}{\textsc{AddPos}\xspace}
\newcommand{\removepos}{\textsc{RemovePos}\xspace}
\newcommand{\dtname}[1]{\textsl{#1}}
\newcommand{\hepmass}{\dtname{Hepmass}\xspace}
\newcommand{\tvads}{\dtname{Tvads}\xspace}
\newcommand{\miniboone}{\dtname{Miniboone}\xspace}
\definecolor{yafaxiscolor}{rgb}{0.3, 0.3, 0.3}
\definecolor{yafcolor1}{rgb}{0.4, 0.165, 0.553}
\definecolor{yafcolor2}{rgb}{0.949, 0.482, 0.216}
\definecolor{yafcolor3}{rgb}{0.47, 0.549, 0.306}
\definecolor{yafcolor4}{rgb}{0.925, 0.165, 0.224}
\definecolor{yafcolor5}{rgb}{0.141, 0.345, 0.643}
\definecolor{yafcolor6}{rgb}{0.965, 0.933, 0.267}
\definecolor{yafcolor7}{rgb}{0.627, 0.118, 0.165}
\definecolor{yafcolor8}{rgb}{0.878, 0.475, 0.686}
\tikzstyle{exnode} = [inner sep = 1pt]
\tikzstyle{labnode} = [sloped, text = black, font = \scriptsize, inner sep = 1pt]
\tikzstyle{exedge} = [yafcolor5, draw, thick, >=latex, ->]
\tikzstyle{exedge2} = [yafcolor2, draw, thick, >=latex, ->]
\tikzstyle{exedge3} = [yafcolor3, draw, thick, >=latex, ->]
\newlength{\yafaxispad}
\newlength{\yaftlpad}
\newlength{\yaflabelpad}
\newlength{\yafaxiswidth}
\newlength{\yafticklen}
\def\pgfplots@drawtickgridlines@INSTALLCLIP@onorientedsurf#1{}
\newcommand{\yafdrawxaxis}[2]{
	\pgfplotstransformcoordinatex{#1}\let\xmincoord=\pgfmathresult 
	\pgfplotstransformcoordinatex{#2}\let\xmaxcoord=\pgfmathresult 
	\pgfsetlinewidth{\yafaxiswidth} 
	\pgfsetcolor{yafaxiscolor}
	\pgfpathmoveto{\pgfpointadd{\pgfpointadd{\pgfplotspointrelaxisxy{0}{0}}{\pgfqpointxy{\xmincoord}{0}}}{\pgfqpoint{-0.5\yafaxiswidth}{\yafaxispad}}}
	\pgfpathlineto{\pgfpointadd{\pgfpointadd{\pgfplotspointrelaxisxy{0}{0}}{\pgfqpointxy{\xmaxcoord}{0}}}{\pgfqpoint{0.5\yafaxiswidth}{\yafaxispad}}}
	\pgfusepath{stroke}

}
\newcommand{\yafdrawyaxis}[2]{
	\pgfplotstransformcoordinatey{#1}\let\ymincoord=\pgfmathresult 
	\pgfplotstransformcoordinatey{#2}\let\ymaxcoord=\pgfmathresult 
	\pgfsetlinewidth{\yafaxiswidth} 
	\pgfsetcolor{yafaxiscolor}
	\pgfpathmoveto{\pgfpointadd{\pgfpointadd{\pgfplotspointrelaxisxy{0}{0}}{\pgfqpointxy{0}{\ymincoord}}}{\pgfqpoint{\yafaxispad}{-0.5\yafaxiswidth}}}
	\pgfpathlineto{\pgfpointadd{\pgfpointadd{\pgfplotspointrelaxisxy{0}{0}}{\pgfqpointxy{0}{\ymaxcoord}}}{\pgfqpoint{\yafaxispad}{0.5\yafaxiswidth}}}
	\pgfusepath{stroke}
}
\newcommand{\yafdrawaxis}[4]{\yafdrawxaxis{#1}{#2}\yafdrawyaxis{#3}{#4}}
\pgfplotsset{axis y line=left, axis x line=bottom,
	tick align=outside,
	compat = 1.3,
	tickwidth=\yafticklen,
	clip = false,
	every axis title shift = 0pt,
    x axis line style= {-, line width = 0pt, opacity = 0},
    y axis line style= {-, line width = 0pt, opacity = 0},
    x tick style= {line width = \yafaxiswidth, color=yafaxiscolor, yshift = \yafaxispad},
    y tick style= {line width = \yafaxiswidth, color=yafaxiscolor, xshift = \yafaxispad},
    x tick label style = {font=\scriptsize, yshift = \yaftlpad},
    y tick label style = {font=\scriptsize, xshift = \yaftlpad},
    every axis y label/.style = {at = {(ticklabel cs:0.5)}, rotate=90, anchor=center, font=\scriptsize, yshift = -\yaflabelpad},
    every axis x label/.style = {at = {(ticklabel cs:0.5)}, anchor=center, font=\scriptsize, yshift = \yaflabelpad},
    x tick label style = {font=\scriptsize, yshift = 1pt},
    grid = major,
    major grid style  = {dash pattern = on 1pt off 3 pt},
	every axis plot post/.append style= {line width=\yafaxiswidth} ,
	legend cell align = left,
	legend style = {inner sep = 1pt, cells = {font=\scriptsize}},
	legend image code/.code={%
		\draw[mark repeat=2,mark phase=2,#1] 
		plot coordinates { (0cm,0cm) (0.15cm,0cm) (0.3cm,0cm) };%
	} 
}
\begin{document}
\title{Efficient estimation of AUC in a sliding window}
%
%
\author{Nikolaj Tatti}
\authorrunning{N. Tatti}
%
\institute{F-Secure, Helsinki, Finland, 
\email{nikolaj.tatti@gmail.com}}
\maketitle              
\begin{abstract}
In many applications, monitoring area under the ROC curve (AUC) in a sliding
window over a data stream is a natural way of detecting changes in the system.
The drawback is that computing AUC in a sliding window is expensive, especially
if the window size is large and the data flow is significant.

In this paper we propose a scheme for maintaining an approximate
AUC in a sliding window of length $k$. More specifically, we propose an
algorithm that, given $\epsilon$, estimates AUC within $\epsilon / 2$, and can
maintain this estimate in $\bigO{(\log k) / \epsilon}$ time, per update, as the
window slides. This provides a speed-up over the exact computation of AUC,
which requires $\bigO{k}$ time, per update. The speed-up becomes more significant
as the size of the window increases.
Our estimate is based on grouping the data points together, and using these groups
to calculate AUC. The grouping is designed carefully such that
($i$) the groups are small enough, so that the error stays small,
($ii$) the number of groups is small, so that enumerating them is not expensive, and
($iii$) the definition is flexible enough so that we can maintain the groups efficiently.

Our experimental evaluation demonstrates that the average approximation error in
practice is much smaller than the approximation guarantee $\epsilon / 2$, and
that we can achieve significant speed-ups with only a modest sacrifice in
accuracy.

\keywords{AUC \and approximation guarantee \and sliding window}
\end{abstract}

\section{Introduction}
Consider monitoring prediction performance in a stream of data points.  That is, we first
receive a data point $d$ without the label, and we predict the missing label
with a score of $s$, after the prediction we receive the true label $\ell$.
We are interested in monitoring how well $s$ predicts $\ell$ as the stream evolves over time.

A good example of such a task is a monitoring system for corporate computers
that detects abnormal behavior based on event logs. Here the positive label
represents an abnormal event that requires a closer inspection, and such a label
can be given, for example, by an expert or triggered automatically. The
produced score can be used for decision making, and can be a specific feature
or a simple statistic, or the result of some classifier, such as logistic
regression.  It is vital to monitor such a system continuously to notice
breakdowns early.  Possible causes may be changes in the underlying
distribution or a system failure, due to the software update.

A natural choice to monitor the predictive power of a real-valued score is the
area under the ROC curve (AUC) in a sliding window over the stream of events as
proposed by~\citet{brzezinski:17:pauc}. Unfortunately, maintaining the exact
AUC requires $\bigO{k}$ time, per new event, where $k$ is the size of the
window. This may be too expensive if $k$ is large and the rate of the events is
significant.

In this paper we propose a technique for estimating AUC efficiently in a
sliding window. Namely, we propose an approximation scheme that has $\epsilon /
2$ approximation error guarantee while having $\bigO{(\log k) / \epsilon}$
update time. That is, the scheme provides a trade-off between the accuracy and
computational complexity. 

Our approach is straightforward. Computing AUC exactly requires sorting data
points and summing over all data points (see Eq.~\ref{eq:auc} for the exact
formula). Maintaining points sorted can be done using binary search trees.
However, estimating the sum requires additional tricks. We approach the problem
by grouping neighboring data points together, that is, treating them as
if the classifier given them the same score.

The key step is to design a grouping such that 3 properties hold at the same time:
($i$) the groups are small enough so that the relative error  is small, more specifically, $\abs{\apauc - \auc} / \auc  \leq \epsilon / 2$,
($ii$) the number of groups is small enough, more specifically, it should be in $\bigO{(\log k) / \epsilon}$, and
($iii$) the definition should be flexible enough so that we can do quick updates whenever points arrive or leave the sliding window.

Roughly speaking, in order to accommodate all 3 demands, we will maintain the
groups with the two following properties:
($i$) the number of positive labels in a group is less than or equal to $(1 + \epsilon)$ than the total number of
positive labels in all the previous groups,
($ii$) the number of positive labels in a group, \emph{and the next group}, is
larger than $(1 + \epsilon)$ than the total number of positive labels in all
the previous groups. The first property will yield the approximation guarantee, while the second property
guarantees that the number of groups remains small. Moreover, these properties are flexible enough
so we can perform update procedures quickly.

The rest of the paper is organized as follows.
We begin by reminding ourselves the definition of AUC in Section~\ref{sec:prel}.
Updating the groups of data points quickly requires several auxiliary structures, which
we introduce in Section~\ref{sec:aux}. We then proceed describing AUC estimation in Section~\ref{sec:auc}.
The related work is given in Section~\ref{sec:related}. In Section~\ref{sec:exp}, we demonstrate
that the relative error in practice is much smaller than the guaranteed bound, as well as, study
the trade-off between the error and the computational cost. Finally, we conclude the paper with discussion in Section~\ref{sec:conclusions}.

\section{Preliminaries}\label{sec:prel}

We start with the definition of AUC, and provide a formula for computing it.

Assume that we are given a set of $k$ pairs $W = (s_i, \ell_i)_i^k$, where $\ell_i$ is
the true label of the $i$th instance, $\ell_i = 0, 1$, and $s_i$ is score
produced by the classification algorithm. The larger $s_i$, the more we believe
that $\ell_i$ should be $0$.\!\footnote{We chose this direction due to the notational convenience.}

In order to predict a label, we need a threshold
$\sigma$, and predict that $\ell_i = 0$ if $s_i \geq \sigma$, and $\ell_i = 1$ otherwise.
The ROC curve is obtained by varying $\sigma$ and plotting true positive rate
as a function of false positive rate. AUC is the area under the ROC curve.
To compute AUC, we can use the following formula. Let
\[
\neglab{s} = \abs{\set{i \mid s_i = s, \ell_i = 0}} \quad\text{and}\quad
\poslab{s} = \abs{\set{i \mid s_i = s, \ell_i = 1}}
\]
be the counts of labels with a score of $s$. Define also $\headpos{s} = \sum_{t
< s} \poslab{t}$. Then,
\begin{equation}
\label{eq:auc}
	\auc = \frac{1}{A}\sum_{s} (\headpos{s} + \frac{1}{2}\poslab{s})\neglab{s},
\end{equation}
where $A = \abs{\set{i \mid \ell_i = 0}}\abs{\set{i \mid \ell_i = 1}}$ is the
normalization factor.  Eq.~\ref{eq:auc} can be computed in $\bigO{k \log k +
k}$ time by first sorting $W$, computing $\headpos{}$, and enumerating over the sum of Eq.~\ref{eq:auc}.

In a streaming setting, $W$ is a sliding window, and our goal is to compute
AUC as $W$ slides over a stream of predictions and labels.

\section{Supporting data structures for estimating AUC}\label{sec:aux}

In this section we introduce supporting data structures that are needed to compute AUC in
a streaming setting. Additional structures and the actual logic for computing AUC
are given in the next section. We begin by describing the data structures, then follow
with introducing the needed query operations, and finally finish with explaining the update procedures.

\subsection{Data structures}

Assume that we have a sequence of pairs $W = (s_i, \ell_i)_{i = 1}^k$, where $s_i$
is the score produced by the classifier, and $\ell_i \in \set{0, 1}$ is the
true label.

We store $W$ in a red-black tree
$T$ sorted by the scores $s_i$. Let $v \in T$ be a node in $T$.  We will denote
the corresponding score of $v$ by $\score{v}$.  We store and maintain the
following information:

\begin{itemize}
\item Counter $\poslab{v} = \abs{\set{i \mid s_i = \score{v}, \ell_i = 1}}$, number of pairs in $W$ with a score $\score{v}$ and a positive label.
\item Counter $\neglab{v} = \abs{\set{i \mid s_i = \score{v}, \ell_i = 0}}$, number of pairs in $W$ with a score $\score{v}$ and a negative label.
\item Counter $\accpos{v}$, the total sum of $\poslab{w}$, where $w$ ranges over all
descendant nodes of $v$ in $T$, including $v$ itself.
\item Counter $\accneg{v}$, the total sum of $\neglab{w}$, where $w$ ranges over all
descendant nodes of $v$ in $T$, including $v$ itself.
\end{itemize}

For simplicity, we will add two sentinel nodes to $T$. The first node will
have a score of $-\infty$ and the second node has a score $\infty$. We will
assume that the actual entries will never achieve these values. Both sentinel nodes 
have 0 positive labels and 0 negative labels.

Note that if the scores $s_i$ are unique, then we have either $\poslab{v} = 1$,
$\neglab{v} = 0$, or $\poslab{v} = 0$, $\neglab{v} = 1$. However, if there are
duplicate scores, then we may have any integer combinations.

In addition to red-black trees, we need to maintain several linked lists,
for which we will now introduce the notation.
Assume that we are given a subset $U$ of nodes in $T$. We would like to
maintain $U$ in a linked list $L$, sorted by the score. For that we will need two
pointers for each node $u \in U$, namely, $\nel{u; L}$ indicating the next
node in $L$, and $\pel{u; L}$ indicating the previous node in $L$.
Let $u \in U$ and assume that $v = \nel{u; L}$ exists. Let 
\[
	B = \set{w \in T \mid \score{u} \leq \score{w} < \score{v}}
\]
be the set of nodes in $T$ between $u$ and $v$. We define
\[
	\gappos{u; L} = \sum_{w \in B} \poslab{w} \quad\text{and}\quad
	\gapneg{u; L} = \sum_{w \in B} \neglab{w} 
\]
to be the total sums of the labels in the gap $B$. We will refer to $L$ as \emph{weighted linked list}.
Note that deleting an element from $L$ and maintaining the gap counters can be done in constant
time. We will refer to the deletion algorithm by $\removelist(L, v)$.
Moreover, adding a new element, say $v$, to $L$ after $u$ can be also done in constant time,
if we already know the total sums of labels, say $p$ and $n$, between $u$ and $v$. 
We will refer to the insertion algorithm by $\addlist(L, u, v, p, n)$.

We say that the node $v \in T$ is \emph{positive}, if $\poslab{v} > 0$.
Similarly, we say that the node $v$ is \emph{negative}, if $\neglab{v} > 0$.
Note that $v$ can be both negative and positive.

We maintain all positive nodes in a weighted linked list, which we will refer
as $P$.
Finally, we also store all positive nodes in its own dedicated red-black tree,
denoted by $\treeP$.
For simplicity, we also store the sentinel nodes of $T$ in $P$ and $\treeP$ as the first and the last nodes.

\subsection{Query procedures}

The first query that we need is $\maxpos(s)$, returning the
\emph{positive} node $v$ with the largest score such that $\score{v} \leq s$.
This can be done in $\bigO{\log k}$ time using $\treeP$, where $k$ is the
number of elements in the window.


Maintaining $\accpos{v}$ and $\accneg{v}$ allows us to query a cumulative
sums of counts. Specifically, given a score $s$, we are interested in
\begin{equation}
	\headpos{v} = \sum_{v \in T \mid \score{v} < s} \poslab{v} \quad\text{and}\quad
	\headneg{v} = \sum_{v \in T \mid \score{v} < s} \neglab{v}\quad.
\end{equation}
We can compute both of these sums with $\cumstats(s)$, given in Algorithm~\ref{alg:cumstats}.

\begin{algorithm}
\caption{$\cumstats(s)$, computes the cumulative counts of labels,
$\headpos{v}$
and
$\headneg{v}$. Assumes that a node in $T$ with a score $s$ exists.}
\label{alg:cumstats}

$hp \define 0$;
$hn \define 0$;

$v \define $ root of $T$\;

\While {\True} {
	\uIf {$\score{v} < s$} {
		$v \define \leftchild{v}$\;
	}
	\Else {
		\If {$\leftchild{v}$} {
			$hp \define hp + \accpos{\leftchild{v}}$\;
			$hn \define hn + \accneg{\leftchild{v}}$\;
		}
		\uIf {$\score{v} = s$} {
			\Return $hp, hn$\;
		}
		\Else {
			$hp \define hp + \poslab{v}$\;
			$hn \define hn + \neglab{v}$\;
			$v \define \rightchild{v}$\;
		}
	}
}
\end{algorithm}

The algorithm assumes that there is a node in $T$ containing $s$, and proceeds to find it;
during the search whenever we go the right branch we add the accumulative sums from the left branch.
We omit the trivial proof of correctness.
Since the tree is balanced, the running time of $\cumstats(s)$ is $\bigO{\log k}$, where $k$ 
is the number of entries in the window.

\subsection{Update procedures}
We now continue to the maintenance procedures as we slide the
window. This comes down to two procedures: ($i$) removing an entry from the window
and ($ii$) adding an entry to the window.


We will
first describe removing an entry with a positive label and a score $s$.
First we will find the node, say $v$, with the score $s$, and reduce the
counter
$\poslab{v}$ by 1. We will need to update the $\accpos{}$ counters. However, we
only need to do it for the ancestors of $v$, and there are only $\bigO{\log k}$ of
them, where $k$ is the number of entries in the window, since $T$ is balanced. We also reduce $\gappos{v; P}$ by 1.
In the process, $v$ may become non-positive, and we need to delete it from $\treeP$
as well as from $P$.

Finally, if $\poslab{v} = \neglab{v} = 0$, we need to delete the node from $T$.
This may result in rebalancing of the tree, and during the balancing we need to make
sure that the counters $\accpos{}$ and $\accneg{}$ are properly updated. Luckily,
the red-black tree balancing is based on left and right rotations. During these
rotations it is easy to maintain the counters without additional costs.

We will refer to this procedure as $\removetreepos(s)$ and the pseudo-code is given in Algorithm~\ref{alg:removetreepos}.
$\removetreepos(s)$ runs in $\bigO{\log k}$ time.

\begin{algorithm}
\caption{$\removetreepos(s, T, \treeP, P)$, removes an entry to $T$ with a positive label and a score $s$.}
\label{alg:removetreepos}
$v \define $ node with score $s$ in $T$\;
update $\poslab{v}$, $\gappos{v; P}$, and $\accpos{}$ counters of the ancestors of $v$\;
\lIf {$\poslab{v} = 0$} {
	remove $v$ from the linked list $P$ and the search tree $\treeP$%
}
\lIf{$\poslab{v} = \neglab{v} = 0$} {remove $v$ from $T$}
\end{algorithm}

Deleting an entry with a negative label and a score $s$ is simpler.
First, we find the node, say $v$, with the score $s$, and reduce the
$\neglab{v}$ counter by 1. If needed, we delete $v$ from $T$.
Finally we use $\maxpos(s)$ to find $u$, the largest positive node with $\score{u} \leq u$,
and reduce $\gapneg{u; P}$ by 1.
The procedure, referred as \removetreeneg, runs in  $\bigO{\log k}$ time.

Next, we will describe the addition of a positive entry with a score $s$.
First, we will add the entry $s$ to $T$, possibly creating a new node in the process.
Let $v$ be the node in $T$ with the score $s$.

If $v$ is a new node, then we need to add it to the weighted linked list $P$.
First, we find the node, say $w = \maxpos(s)$, after which $v$ is supposed to be added. We need to compute
the new gap counter $\gapneg{v; P}$. By definition, this value is equal to the total
count of negative labels of nodes between $w$ and $v$, including $w$. 
Thus, this new gap counter is equal to $\headneg{w} - \headneg{v}$.
Both counters can be obtained using $\cumstats{}$ in $\bigO{ \log k}$ time.


We will refer to this procedure as $\addtreepos(s)$, and the pseudo-code is given in Algorithm~\ref{alg:addtreepos}.
$\addtreepos(s)$ runs in $\bigO{\log k}$ time.

\begin{algorithm}
\caption{$\addtreepos(s)$, adds an entry to $T$ with a positive label and a score $s$.}
\label{alg:addtreepos}

$w \define \maxpos(s)$\;
add $s$ to $T$ (possibly creating new node), and update $\accpos{}$ and $\poslab{}$ counters\;
$v \define $ node with score $s$ in $T$\;
\If {$w \neq v$} {
	add $v$ to $\treeP$\;	
	$p_1, n_1 \define \cumstats(\score{w})$\;
	$p_2, n_2 \define \cumstats(\score{v})$\;
	$\addlist(P, w, v, 1, n_2 - n_1)$ \;
}
\Return $v$\;
\end{algorithm}

Adding an entry with negative label and a score $s$ is simpler.
First, we will add the entry $s$ to $T$, possibly creating a new node in the process.
Let $v$ be the node in $T$ with a score $s$.
Then, we use $\maxpos(s)$ to find $u$, the largest positive node with $\score{u} \leq u$,
and increase $\gapneg{u}$ by 1. The procedure, referred as \addtreeneg, runs in  $\bigO{\log k}$ time.

\section{Estimating AUC efficiently}\label{sec:auc}

In order to approximate AUC, we will use Eq.~\ref{eq:auc} as a basis. However,
instead of enumerating over every node we will enumerate only over some
selected nodes. The key is how to select the nodes such that we will obtain
the approximation guarantee while keeping the number of nodes small.

We will maintain a weighted linked list $\aucP$. Given $\alpha > 1$, we say that
$\aucP$ is $\alpha$-compressed, if for every two consecutive nodes in $\aucP$, say $v$ and $w$,
it holds that 
\begin{equation}
\label{eq:alpha1}
	\headpos{w} \leq \alpha (\headpos{v} + \poslab{v}),
\end{equation}
and if $u = \nel{w; \aucP}$ exists, then 
\begin{equation}
\label{eq:alpha2}
	\headpos{u} > \alpha (\headpos{v} + \poslab{v})\quad.
\end{equation}

Eq.~\ref{eq:alpha1} will yield the approximation guarantee, while
the Eq.~\ref{eq:alpha2} will guarantee the running time.

\subsection{Computing approximate AUC}

Our next step is to show how we can approximate AUC using a compressed list $L$
in $\bigO{L}$ time. The idea is as follows. Let $B$ be the set of nodes between
two consecutive nodes $v$ and $w$ in $L$. Normally, we would have to go over
each individual node in $B$ when computing AUC. Instead, we will group $B$
to a \emph{single} node. We will use 
the total number of positive labels in $B$, that is, $\gappos{v; L} - \poslab{v}$, for the number of positive labels for this node.
Similarly, we will use $\gapneg{v; L} - \neglab{v}$ for the negative labels.
The pseudo-code for the algorithm is given in Algorithm~\ref{alg:approxauc}.

\begin{algorithm}
\caption{$\approxauc(L)$ computes approximate AUC using a weighted linked list.}
\label{alg:approxauc}
$hp \define 0$; $a \define 0$\;
\While {$v \in L$} {
	$p \define \poslab{v}$; $n \define \neglab{v}$\;
	$a \define a + (hp + p / 2)n$\;
	$hp \define hp + p$\;
	$p \define \gappos{v; L} - \poslab{v}$; $n \define \gapneg{v; L} - \neglab{v}$\;
	$a \define a + (hp + p / 2)n$\;
	$hp \define hp + p$\;
}
$A \define (\text{total number of positive labels})\times(\text{total number of negative labels})$\;
\Return $a / A$\;
\end{algorithm}

Let us first establish that \approxauc produces an accurate estimate.

\begin{proposition}
\label{prop:approx}
Let $L$ be $(1 + \epsilon)$-compressed list constructed from the search tree $T$.
Let $\apauc = \approxauc(L)$ be an approximate AUC, and let $\auc$ be the correct AUC.
Then $\abs{\apauc - \auc} \leq \epsilon \auc / 2$.
\end{proposition}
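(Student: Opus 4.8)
The plan is to track the error gap by gap and to charge each gap's error against the true AUC mass that its own negative labels already generate. First I would fix notation: write $v, w = \nel{v; L}$ for a pair of consecutive nodes of $L$, let $G$ denote the set of tree nodes with $\score{v} < \score{w'} < \score{w}$, and let $p = \gappos{v; L} - \poslab{v}$ and $n = \gapneg{v; L} - \neglab{v}$ be the aggregate counts of $G$. Since grouping preserves the total number of positive and negative labels, the normalizer $A$ is identical for $\auc$ and $\apauc$, so it suffices to compare the two numerators. The key preliminary observation is an invariant on the variable $hp$ in \approxauc: because every list node is emitted with its exact labels and every gap is emitted with the correct aggregate positive count $p$, the value of $hp$ at the start of each gap equals the true cumulative count there, namely $\headpos{v} + \poslab{v}$. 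Consequently the list nodes reproduce their $\auc$ terms exactly, and the whole discrepancy $\abs{\apauc - \auc}\cdot A$ decomposes as a sum, over gaps, of the difference between the exact contribution of $G$ and the single aggregated term emitted for $G$.

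Next I would compute this per-gap difference. Enumerating $G = \{w_1, \ldots, w_j\}$ in increasing score order with counts $p_i, n_i$ and writing $H = \headpos{v} + \poslab{v}$, the exact contribution is $E = \sum_i (H + \sum_{l < i} p_l + \tfrac{1}{2} p_i) n_i$, whereas \approxauc emits $\tilde E = (H + \tfrac{1}{2} p) n$. The terms proportional to $H$ cancel, leaving $E - \tilde E = \sum_i (\sum_{l < i} p_l + \tfrac{1}{2} p_i) n_i - \tfrac{1}{2} p n$, that is, the internal (un-normalized) AUC of $G$ minus half the product $p n$. Since each coefficient $\sum_{l < i} p_l + \tfrac{1}{2} p_i$ lies in $[0, p]$, the internal AUC lies in $[0, p n]$, and therefore $\abs{E - \tilde E} \leq \tfrac{1}{2} p n$.

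The final and most delicate step is to convert this absolute bound into the stated relative bound via the compression property Eq.~\ref{eq:alpha1}. Since $\headpos{w} = H + p$, Eq.~\ref{eq:alpha1} with $\alpha = 1 + \epsilon$ gives $p \leq \epsilon H$, so the per-gap error is at most $\tfrac{\epsilon}{2} H n$. On the other hand, every node of $G$ has score larger than $\score{v}$, hence its $\headpos{}$ is at least $H$, so the true AUC mass contributed by the negative labels inside $G$ is at least $H n$. Thus each gap's error is bounded by $\tfrac{\epsilon}{2}$ times a summand of the exact numerator, and because distinct gaps involve disjoint nodes, these summands are non-overlapping nonnegative pieces of $\auc \cdot A$. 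Summing over all gaps yields $\abs{\apauc - \auc} \cdot A \leq \tfrac{\epsilon}{2} \auc \cdot A$, and dividing by $A$ gives the claim. I expect the main obstacle to be exactly this charging argument: one must confirm that the $H n$ lower bound on each gap's negative contribution never reuses mass across gaps, and that associating the gap with $v$ (so that the inequality $\headpos{w} = H + p \le \alpha H$ reads off directly) is the orientation of the compression property that makes $p \leq \epsilon H$ fall out cleanly.
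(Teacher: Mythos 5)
Your proof is correct and takes essentially the same route as the paper's: the paper runs the identical chain of inequalities node by node rather than gap by gap --- each negative's coefficient $b = \headpos{v} + \frac{1}{2}\poslab{v}$ is replaced by the midpoint $c_v = \frac{1}{2}(\headpos{u} + \poslab{u} + \headpos{w})$, whose error is at most half the gap width, which Eq.~\ref{eq:alpha1} bounds by $\frac{\epsilon}{2}(\headpos{u} + \poslab{u}) \leq \frac{\epsilon}{2} b$ --- so your per-gap bound $\abs{E - \tilde{E}} \leq \frac{1}{2}pn \leq \frac{\epsilon}{2}Hn \leq \frac{\epsilon}{2}E$ is the same argument summed first over the gap. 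The steps you flag as delicate (disjointness of the gaps and the orientation of Eq.~\ref{eq:alpha1}) go through exactly as you describe.
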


\begin{proof}
Let $A$ be as defined in \approxauc.
Let $v \in T$ be a node, and let $u$ be the node in $L$ with the largest score such that $\score{u} < \score{v}$.
Let $w = \nel{u; L}$ be the next node. Define 
\[
	c_v = \frac{1}{2}(\headpos{u} + \poslab{u} + \headpos{w}) \quad.
\]
Then, \approxauc returns
\begin{equation}
\label{eq:approxauc}
	\apauc = \frac{1}{A}\sum_{v \in L} (\headpos{v} + \frac{1}{2}\poslab{v})\neglab{v}
	+ \sum_{v \in T \setminus L} c_v \neglab{v} \quad.
\end{equation}
We will argue the approximation guarantee by comparing the terms in Eq.~\ref{eq:auc} and Eq.~\ref{eq:approxauc}.  
Let $v$ be a node in $L$. Then the corresponding term can be found in sums of both equations.

Let $v \in T \setminus L$, and write $b = \headpos{v} + \frac{1}{2}\poslab{v}$.
Let $u$ be the node in $L$ with the largest score
such that $\score{u} \leq \score{v}$.  Let $w = \nel{u; L}$ be the next node.
By definition, we have $\headpos{u} + \poslab{u} \leq b \leq \headpos{w}$.
Since $c_v$ is the average of the lower bound and the upper bound, we have
\[
	\abs{b - c_v} \leq \frac{1}{2}(\headpos{w} - \headpos{u} - \poslab{u})  \leq \frac{\epsilon}{2}  (\headpos{u} + \poslab{u}) \leq \frac{\epsilon b}{2},
\]
where the second inequality follows since $L$ is $(1 + \epsilon)$-compressed.

We have shown that the approximation holds for individual terms. Consequently, it holds for the summands $\apauc$ and $\auc$, completing the proof.
\qed
\end{proof}

Two remarks are in order. First, since AUC is always smaller than 1, Proposition~\ref{prop:approx} implies
that the approximation is also absolute, $\abs{\apauc - \auc} \leq \epsilon / 2$. The relative approximation is more accurate
if AUC is small. However, if AUC is close to 1, it may make sense to reverse the approximation guarantee, 
that is, modify the algorithm such that we have a guarantee of $\abs{\apauc - \auc} \leq (1 - \auc)\epsilon / 2$.
This can be done by flipping the labels, and using $1 - \approxauc(\aucP)$ as the estimate.

\approxauc runs in $\bigO{\abs{L}}$ time. Next we establish that $\abs{L}$ is small.

\begin{proposition}
Let $L$ be $(1 + \epsilon)$-compressed list.
Then
$\abs{L} \in \bigO{ \frac{\log k }{ \epsilon}}$,
where $k$ is the number of entries in the sliding window.
\end{proposition}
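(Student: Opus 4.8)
The plan is to show that $\headpos{}$ grows geometrically as we walk along $L$, so that only $\bigO{\log_{1+\epsilon} k}$ nodes can fit before it saturates at the total number of positive labels. Write the nodes of $L$ in increasing order of score as $v_0, v_1, \ldots, v_m$ (with $v_0, v_m$ the sentinels), and set $q_i = \headpos{v_i} + \poslab{v_i}$, the number of positive labels occurring at or before $v_i$. First I would record two elementary monotonicity facts: every positive label counted in $q_i$ lies strictly before $v_{i+1}$, so $\headpos{v_{i+1}} \geq q_i$; and consequently $q_{i+1} \geq \headpos{v_{i+1}} \geq q_i$. Thus the sequence $q_0 \leq q_1 \leq \cdots \leq q_m$ is non-decreasing and bounded above by the total count of positive labels, which is at most $k$.

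The core step is to invoke the lower-bound property of a compressed list. Applying Eq.~\ref{eq:alpha2} to the consecutive pair $v_{i-1}, v_i$ with $u = \nel{v_i; L} = v_{i+1}$ gives $\headpos{v_{i+1}} > \alpha(\headpos{v_{i-1}} + \poslab{v_{i-1}}) = \alpha q_{i-1}$, where $\alpha = 1 + \epsilon$. Combined with $q_{i+1} \geq \headpos{v_{i+1}}$, this yields $q_{i+1} > \alpha q_{i-1}$: the value of $q$ grows by a factor larger than $\alpha$ every two steps. Since the counts are integers, once $q_i \geq 1$ it stays $\geq 1$, so along the tail where $q_i \geq 1$ the even- and odd-indexed subsequences of $q$ each increase geometrically from at least $1$ to at most $k$. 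Hence the number of indices $i$ with $q_i \geq 1$ is at most $2\log_\alpha k + \bigO{1}$.

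The step I expect to be the only real subtlety is the prefix where $q_i = 0$, i.e.\ the nodes lying at or before the first positive label; there the right-hand side of Eq.~\ref{eq:alpha2} is $0$, so it forces no growth and the geometric argument says nothing. To handle it I would observe that if two consecutive nodes $v_{i-1}, v_i$ both have $q = 0$, then $\headpos{v_{i-1}} + \poslab{v_{i-1}} = 0$, and Eq.~\ref{eq:alpha2} forces the following node to satisfy $\headpos{v_{i+1}} > 0$, hence $q_{i+1} \geq 1$. So three consecutive nodes cannot all have $q = 0$, and since the zero-$q$ nodes form a prefix (by monotonicity) there are at most two of them.

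Adding the two cases gives $\abs{L} \leq 2\log_\alpha k + \bigO{1}$. It then remains only to change the base of the logarithm: using $\log(1 + \epsilon) \geq \epsilon / 2$ for $\epsilon \in (0, 1]$, we have $\log_\alpha k = \log k / \log(1+\epsilon) \leq 2 \log k / \epsilon$, and therefore $\abs{L} \in \bigO{(\log k)/\epsilon}$, as claimed.
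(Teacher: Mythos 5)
Your proof is correct and follows essentially the same route as the paper's: both derive from Eq.~\ref{eq:alpha2} that the cumulative positive count grows by a factor of $\alpha = 1+\epsilon$ every two steps along $L$, bound it by $k$, and convert $\log_{1+\epsilon} k$ into $\bigO{(\log k)/\epsilon}$. Your handling of the zero-count prefix is a little more explicit than the paper's (which simply asserts $\headpos{u_2} \geq 1$), but the argument is the same.
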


\begin{proof}
Write $L = u_0, \ldots, u_m$.
Since $L$ is $(1 + \epsilon)$-compressed,
$\headpos{u_2} \geq 1$ and $\headpos{u_{i + 2}} > (1 + \epsilon) \headpos{u_i}$.
Since $\headpos{u_m} \leq k$, we have
$(1 + \epsilon)^{\floor{m / 2}  - 1} \leq k$. Solving for $m$ leads to
$m \in \bigO{ \frac{\log k}{\log{1 + \epsilon}}} \subseteq \bigO{ \frac{\log k}{ \epsilon}}$.
\qed
\end{proof}

\subsection{Updating the data structures}

Our final step is to describe procedures for maintaining $\aucP$ as the data
window slides.
In the previous section, we already described how to update the search trees
$T$ and $\treeP$ as well as the weighed linked list $P$. Our next step is to
make sure that the weighted linked list $\aucP$ stays $\alpha$-compressed.

We will need two utility routines. The first routine, $\addcomppos$, given in
Algorithm~\ref{alg:addnext}, takes as input a node included in both $P$ and
$\aucP$, and adds to $\aucP$ the next node in $P$.
This procedure will be used
extensively to add extra nodes to $\aucP$ so that Eq.~\ref{eq:alpha1} is satisfied.

\begin{algorithm}
\caption{$\addcomppos(v, L, P)$, adds the following node of $v$ in $P$ to $L$. Here $P$ is the weighted linked list
of all positive labels, and $v$ is a node in $P$ and $L$.}
\label{alg:addnext}
$w \define \nel{v, P}$\;
$p \define \gappos{v, P}$;
$n \define \gapneg{v, P}$\;
\lIf{$w \notin L$}{$\addlist(L, v, w, p, n)$}  
\end{algorithm}

Next, we demonstrate how \addcomppos enforces Eq.~\ref{eq:alpha1}.

\begin{lemma}
\label{lem:addcorrpos}
Assume that a linked list $L$ satisfies Eq.~\ref{eq:alpha1} for consecutive positive nodes $v$ and $w$.
Add or remove a single positive entry with a score $s$, and assume that $v$ and $w$ are still positive.
Let $u$ be the next positive node from $v$ in $P$, and
let $L'$ be the list obtained from $L$ by adding a \emph{positive} node $u$.
Then Eq.~\ref{eq:alpha1} holds for $L'$ for the nodes $v$ and $u$ as well as for the nodes $u$ and $w$.
\end{lemma}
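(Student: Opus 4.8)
The plan is to verify the two instances of Eq.~\ref{eq:alpha1} separately, the one for the pair $(v, u)$ and the one for the pair $(u, w)$, writing $\alpha = 1 + \epsilon$ and letting every quantity refer to the state \emph{after} the single update. I may assume $\score{v} < \score{u} < \score{w}$: if instead $u = w$, then there is no positive node whose score lies strictly between $\score{v}$ and $\score{w}$, so $\headpos{w} = \headpos{v} + \poslab{v}$, adding $u$ leaves $L$ unchanged, and both required inequalities are then immediate.

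The inequality for $(v, u)$ is purely structural. Since $u = \nel{v; P}$ is the immediate positive successor of $v$, no positive labels have a score strictly between $\score{v}$ and $\score{u}$, and hence $\headpos{u} = \headpos{v} + \poslab{v}$. As $\alpha > 1$, this gives $\headpos{u} \leq \alpha(\headpos{v} + \poslab{v})$, which is exactly Eq.~\ref{eq:alpha1} for $(v, u)$, and it needs no appeal to the hypothesis.

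The real work is the inequality for $(u, w)$, namely $\headpos{w} \leq \alpha(\headpos{u} + \poslab{u})$; here I would transfer the assumed bound $\headpos{w} \leq \alpha(\headpos{v} + \poslab{v})$, which holds \emph{before} the update, to the post-update state. Abbreviate $q = \headpos{v} + \poslab{v}$, so that the structural identity above reads $\headpos{u} = q$ and the target becomes $\headpos{w} \leq \alpha(q + \poslab{u})$. The key observation is that a single insertion or deletion at a score $s$ shifts both $\headpos{w}$ and $q$ by at most one, and does so in a \emph{coordinated} way: $q$ changes only when $s \leq \score{v}$, and in that case $s < \score{w}$ as well, so $\headpos{w}$ changes by the same signed amount. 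Thus the harmful configuration in which $\headpos{w}$ grows while $q$ shrinks cannot arise from one update, and combining the pre-update bound with these at-most-unit shifts leaves at most an additive slack of one. That slack is absorbed by the margin $\alpha(q + \poslab{u}) - \alpha q = \alpha \poslab{u} \geq \alpha > 1$, using that $u$ is positive, i.e.\ $\poslab{u} \geq 1$. Carrying this out amounts to a brief case split on the location of $s$ (at most $\score{v}$; equal to $\score{u}$; strictly between $\score{v}$ and $\score{w}$; or at least $\score{w}$) and on the sign of the update, each case reducing to one of these two margin estimates.

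I expect the coordination observation to be the only genuine obstacle. A naive argument that bounds the change in $\headpos{w}$ and the change in $q$ independently is too lossy: it would demand $\poslab{u} \geq 1 + 1/\alpha$, which integrality does not provide. The point is that a single update cannot simultaneously push $\headpos{w}$ up and pull $q$ down, so the true slack is only one unit and is comfortably covered by $\poslab{u} \geq 1$ together with $\alpha > 1$.
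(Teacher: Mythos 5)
Your proposal is correct and follows essentially the same route as the paper's proof: the pair $(v,u)$ is handled by the structural identity $\headpos{u}=\headpos{v}+\poslab{v}$, and the pair $(u,w)$ by a case split on the sign of the update and the position of $s$ relative to $\score{w}$, transferring the pre-update bound and absorbing the resulting unit discrepancy via $\poslab{u}\geq 1$ and $\alpha>1$ --- exactly the paper's chain $c_w'\leq\alpha(c_v'+b_v'+1)=\alpha(c_u'+1)\leq\alpha(c_u'+b_u')$. Your ``coordinated shift'' observation is the same fact the paper uses as $c_v+b_v\leq c_v'+b_v'$ (resp.\ $c_v+b_v-1\leq c_v'+b_v'$), and your explicit treatment of the degenerate case $u=w$ is a small addition the paper leaves implicit.
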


\begin{proof}
Let us write $c_x = \headpos{x}$ before modifyng $T$, and $c_x' = \headpos{x}$ after the modification.
Similarly, write $b_x = \poslab{x}$ before the modification, and $b_x' = \poslab{x}$ after the modification.

Since $u$ is the next positive node of $v$, we have
	$c_u' = c_v' + b_v' \leq \alpha(c_v' + b_v')$,
proving the case of $v$ and $u$.

If $s \geq \score{w}$, then $c_w' = c_w \leq \alpha(c_v + b_v) = \alpha c_u = \alpha c_u' \leq \alpha (c_u' + b_u')$.

If we are adding $s$ and $s < \score{w}$, then
\[
	c_w' = c_w + 1 \leq \alpha(c_v + b_v + 1) \leq \alpha(c_v' + b_v' + 1) = \alpha(c_u' + 1) \leq \alpha(c_u' + b_u'),
\]
where the last inequality holds since $u$ is a positive node.

If we are removing $s$ and $s < \score{w}$, then $c_v + b_v - 1 \leq c_v' + b_v'$, and so
\[
	c_w' \leq c_w \leq \alpha(c_v + b_v) \leq \alpha(c_v' + b_v' + 1) = \alpha(c_u' + 1) \leq \alpha(c_u' + b_u').
\]
This proves the case for $u$ and $w$, and completes the proof.\qed
\end{proof}

Note that the execution of \addcomppos is done in constant time, the key step for
this being able to obtain $\gappos{v, P} = \poslab{v}$ and $\gapneg{v, P}$ in constant time. This is the main reason why we maintain
$P$.

While the first utility algorithm adds new entries to $\aucP$,
our second utility algorithm, \compresspos, given in Algorithm~\ref{alg:compress} tries to delete as many entries as possible. It assumes that the input list $\aucP$ already
satisfies Eq.~\ref{eq:alpha1}, and searches for violations of Eq.~\ref{eq:alpha2}. Whenever such
violation is found, the algorithm proceeds deleting the middle node. Note that deleting this node will not violate Eq.~\ref{eq:alpha1}.
Consequently, upon termination, the resulted linked list will be $\alpha$-compressed.
The computational complexity of $\compresspos(\aucP, \alpha)$ is $\bigO{\abs{\aucP}}$.

\begin{algorithm}
\caption{$\compresspos(L, \alpha)$, forces a weighted linked list $L$ that satisfies Equation~\ref{eq:alpha1} to also
satisfy Equation~\ref{eq:alpha2}, making $L$ $\alpha$-compressed.}
\label{alg:compress}
$v \define $ first element in $L$\;
$c \define 0$\;

\While {$\nel{\nel{v; L}; L}$ exists} {
	$w \define \nel{v; L}$\;
	\uIf {$c + \gappos{v; L} + \gappos{w; L} \leq \alpha (c + \poslab{v})$} {
		delete $w$ from $L$\;
	}
	\Else {
		$c \define c + \gappos{v; L}$\;
		$v \define w$\;
	}
}
\end{algorithm}

Next, we describe the update steps.
We will start with the easier ones:

\paragraph{Adding negative entry:}
Given a negative entry with a score $s$, we first invoke \addtreeneg.
Then we
search $u \in \aucP$ with the largest score such that $\score{u} \leq s$.
Once this entry is found, we increase $\gapneg{u; \aucP}$ by 1.

\paragraph{Removing negative entry:}
Given a negative entry with a score $s$, we first invoke \removetreeneg. Then we
search $u \in \aucP$ with the largest score such that $\score{u} \leq s$.
Once this entry is found, we decrease $\gapneg{u; \aucP}$ by 1.

Since the positive labels are not modified, $\aucP$ remains
$\alpha$-compressed, so there is no need for modifying $\aucP$.
The running
time for both routines is $\bigO{\log k + \frac{\log k}{\epsilon}}$.

Let us now consider more complex cases:

\paragraph{Adding positive entry:}
Given a positive entry with a score $s$, we first invoke \addtreepos. 
Then we search $u \in \aucP$ with the largest score such that $\score{u} \leq s$.
Once this entry is found, we increase $\gappos{u; \aucP}$ by 1.
By doing so, we may have violated Eq.~\ref{eq:alpha1} for $u$.
Lemma~\ref{lem:addcorrpos} states that we can correct the problem
by adding the next positive node for each violation.
However, a closer inspection of the proof shows that there can be
only one violation, namely $u$. Consequently, we check if Eq.~\ref{eq:alpha1} holds for $u$,
and if it fails, we add the next positive node by invoking  $\addcomppos(u, \aucP, P)$.
Finally, we call $\compresspos(\aucP, \alpha)$ to force Eq.~\ref{eq:alpha2}; ensuring that $\aucP$
is $\alpha$-compressed.
The pseudo-code for \addpos is given in Algorithm~\ref{alg:addpos}.

\begin{algorithm}
\caption{$\addpos(s, \alpha; T, \treeP, P, \aucP)$, adds an entry with a positive label and a score $s$, updates the tree structures $T$ and $\treeP$ and
the weighted linked lists $P$ and $\aucP$.}
\label{alg:addpos}
$v \define \addtreepos(s, T, \treeP, P)$\;
$u \define \arg \max \set{\score{w} \mid w \in \aucP, \score{w} \leq s}$ \;
$\gappos{u; C} \define \gappos{u; C} + 1$\;
$c \define \sum_{w \in \aucP \mid \score{w} < \score{u}} \gappos{w; C} $\tcpas*{$c = \headpos{u}$}
\lIf {$c + \gappos{u; C} > \alpha (c + \poslab{v})$} {
	$\addcomppos(u, \aucP, P)$
}
$\compresspos(\aucP, \alpha)$\;
\end{algorithm}

\paragraph{Removing positive entry:}
Assume that we are given a positive entry with a score $s$.
First we search $u \in \aucP$ with the largest score such that $\score{u} \leq s$.
Once this entry is found, we decrease $\gappos{u; \aucP}$ by 1.
If $u$ is no longer positive, we add the next positive entry to $\aucP$ and
delete $u$ from $\aucP$. The reason for this is explained later.
We proceed by deleting the entry from the search trees with \removetreepos.

Next we make sure that Eq.~\ref{eq:alpha1} holds for every consecutive
nodes $v$ and $w$. There are two possible cases: ($i$) $v$ and $w$ were
consecutive nodes in $\aucP$ before the deletion, or ($ii$) $u$ was deleted from
$\aucP$, and $w$ was the next positive node before the deletion.
In the first case,
Lemma~\ref{lem:addcorrpos} guarantees that using \addcomppos forces Eq.~\ref{eq:alpha1}.
In the second case, note that $\headpos{w}$ \emph{after} the deletion is equal to
$\headpos{u}$ \emph{before} the deletion of $u$. This implies that since
Eq.~\ref{eq:alpha1} held for $v$ and $u$ before the deletion, Eq.~\ref{eq:alpha1} holds for $v$ and $w$ after the deletion.
Finally, we enforce
Eq.~\ref{eq:alpha2} with \compresspos.
The pseudo-code for \removepos is given in Algorithm~\ref{alg:removepos}.

\begin{algorithm}
\caption{$\removepos(s, \alpha; T, \treeP, P, \aucP)$, removes an entry with a positive label and a score $s$, updates the tree structures $T$ and $\treeP$ and
the weighted linked lists $P$ and $\aucP$.}
\label{alg:removepos}

$u \define \arg \max \set{\score{w} \mid w \in \aucP, \score{w} \leq s}$\;
$\gappos{u} \define \gappos{u} - 1$\;

\uIf {$u \in \aucP$ \AND $\poslab{u} = 1$} {
	$\addcomppos(u, \aucP, P)$\;
	$\removelist(\aucP, u)$\;
}

$\removetreepos(s, T, \treeP, P)$\;

$v \define $ first element in $\aucP$\;
$c \define 0$\;
\While {$\nel{v; \aucP}$ exists} {
	$w \define \nel{v; \aucP}$\;
	$x \define \gappos{v; \aucP}$\;
	\lIf {$c + x > \alpha (c + \poslab{v})$} {
		$\addcomppos(v, \aucP, P)$%
	}
	$c \define c + x$\;
	$v \define w$\;
}

$\compresspos(\aucP, \alpha)$\;
\end{algorithm}

In both routines, modifying the search trees is done in $\bigO{\log k}$ time,
while modifying $\aucP$ is done in $\bigO{\abs{\aucP}} \subseteq \bigO{\frac{\log k}{\epsilon}}$
time.

\section{Related work}\label{sec:related}

The closest related work is a study by~\citet{bouckaert:06:auc}, where the
author divided the ROC curve area into bins, allowing only to maintain the
counters for individual bins.  However, the number of the bins as well as the bins
were static, and no direct approximation guarantees were provided. 

Using AUC in a streaming setting was proposed in a paper by
\citet{brzezinski:17:pauc}. Here the authors use red-black tree, similar to
$T$, to maintain the order of the data points in a sliding window, but they recompute
the AUC from scratch every time, leading to a update time of $\bigO{k + \log k}$.
In fact, our approach is essentially equivalent to their approach if we set $\epsilon = 0$.

Note that using AUC is useful if we do not have a threshold to binarize the
score. If we do have such a threshold, then we can easily maintain a confusion
matrix, which allows us to compute many metrics, such as,
accuracy, recall, $F1$-measure~\cite{gama2013evaluating,gama2010knowledge}, and
Kappa-statistic~\citep{bifet2010sentiment,vzliobaite2015evaluation}. However,
determining such a threshold may be extremely difficult since it depends on the
misclassification costs. Selecting such costs may come down to a(n educated)
guess. 

We based our AUC calculation on a sliding window, that is, we abruptly forget
the data points after certain period of time. The other option is to gradually
forget the data points, for example using an exponential decay (see a survey
by~\citet{gama2014survey} for such examples). There are currently no
methodology for efficiently estimating AUC under exponential decay, and this is
a promising future line of work.

In a related line of work, training a classifier by optimizing AUC in a static
setting has been proposed by~\citet{ataman2006learning,ferri2002learning,brefeld2005auc,herschtal2004optimising}. Here, AUC is used as an optimization criterion,
and needs to be recomputed from scratch in $\bigO{\abs{D} \log \abs{D}}$ time.
Naturally, this may be too expensive for large databases.
\citet{calders:07:auc} estimated AUC as a continuous function. This allowed to
view AUC as a smooth function, and optimize the parameters of the underlying
classifier efficiently using gradient descent techniques. While the underlying
problem is the same as ours, that is, computing AUC from scratch is expensive, the maintenance
procedures make problems orthogonal: in our settings we are required to do updates
when a single data point leaves or enters to our window, whereas here AUC needs
to be recomputed since the scores (and the order) for all existing data points have changed.
However, it may be possible and fruitful to use similar tricks in order to
speed-up the AUC calculation when optimizing classifiers. We leave this as a future line
of work.

\citet{hand:09:alternative} proposed a fascinating alternative for AUC. Namely,
the author views AUC as the optimal classification loss averaged (with weights)
over misclassification cost ratio.  He then argues that AUC evaluates
incoherently, namely the cost ratio weights depend on the ROC curve, and then
he proposes a different coherent alternative. The computation of proposed
metric, though more complex, shares some similarity with AUC, and it may be
possible to use similar techniques as in this paper to approximate this
measure efficiently in a stream.

\section{Experimental evaluation}\label{sec:exp}
In this section we present our experimental evaluation. We have two goals: to
demonstrate the relative error in practice as a function of the guaranteed error,
and to demonstrate the trade-off between the computational cost and the
error.

We implemented calculation of AUC using C++, and conducted the experiments
using Macbook Air (1.6 GHz Intel Core i5 / 8 GB Memory).\!\footnote{See \url{https://bitbucket.org/orlyanalytics/streamauc} for the implementation.} As a classifier we
used Python's scikit implementation of logistic regression. Computing AUC
was done in a separate job from training the classifier as well as scoring
new data points; the reported running times measure only the computation of AUC
over the whole test data.

We used 3 UCI datasets\footnote{\url{https://archive.ics.uci.edu/}} for our experiments, see Table~\ref{tab:basic}:
($i$) \hepmass, a dataset containing features from simulated particle collisions, split in training and test datasets.
We used the \hepmass-1000 variant. Due to the memory restrictions of Python, 
we only used a sample of $500\,000$ data points from training data. We used the whole test dataset.
($ii$) \miniboone: a data used to distinguish electron neutrinos from muon neutrinos.
Since the original data has data points ordered by label, we permuted the dataset
and split it to training and test data.
($iii$) \tvads: a data containing features for identifying commercials from TV news channels.
We used BBC and CNN channels as training data, and the remaining channels as test data.

\begin{table}

\caption{Basic characteristics of the benchmark datasets.}
\label{tab:basic}

\begin{tabular*}{\textwidth}{@{\extracolsep{\fill}}lrr}
\toprule
Dataset & size of training dataset & size of test dataset\\
\midrule
\hepmass   & $500\,000$ & $3\,500\,000$\\
\miniboone &  $30\,064$ &    $100\,000$\\
\tvads     &  $40\,265$ &     $89\,420$ \\
\bottomrule
\end{tabular*}
\end{table}

\newlength{\imgwidth}
\setlength{\imgwidth}{2.6cm}

\newlength{\imgheight}
\setlength{\imgheight}{2.3cm}

\textbf{Actual error vs. guarantee}:
Proposition~\ref{prop:approx} states that the error cannot be more than $\epsilon / 2$.
First, we test the actual relative error, that is,  $\abs{\apauc - \auc} / \auc$
as a function of $\epsilon$.  Here we set the sliding window size to be $1000$.

The top row of
Figure~\ref{fig:relative} shows the relative error, averaged over all sliding windows,
and the bottom row of
Figure~\ref{fig:relative} shows the relative error, maximized over all sliding windows.
From the results we see that both maximum and average error are smaller than the guaranteed.
Especially, the average error is typically smaller of several orders than the theoretical guarantee.
As expected, both errors tend to increase as $\epsilon$ increases.

\begin{figure}[ht!]
\begin{tabular}{rrr}
\subcaptionbox{\hepmass\label{fig:foo}}{
\begin{tikzpicture}
\begin{axis}[xlabel={$\epsilon$}, ylabel= {average relative error},
    width = \imgwidth,
	height = \imgheight,
    cycle list name=yaf,
	tick scale binop=\times,
	scale only axis,
	no markers
    ]
\addplot+[yafcolor5] table[x index = 0, y index = 3, header = false] {results/hepmass.scores.res};
\pgfplotsextra{\yafdrawaxis{0}{0.9}{0}{0.012}}
\end{axis}
\end{tikzpicture}} &
\subcaptionbox{\tvads\label{fig:foo}}{
\begin{tikzpicture}
\begin{axis}[xlabel={$\epsilon$}, ylabel= {average relative error},
    width = \imgwidth,
	height = \imgheight,
    cycle list name=yaf,
	tick scale binop=\times,
	scale only axis,
	no markers
    ]
\addplot+[yafcolor5] table[x index = 0, y index = 3, header = false] {results/tv.scores.res};
\pgfplotsextra{\yafdrawaxis{0}{0.9}{0}{0.0042}}
\end{axis}
\end{tikzpicture}} &
\subcaptionbox{\miniboone\label{fig:foo}}{
\begin{tikzpicture}
\begin{axis}[xlabel={$\epsilon$}, ylabel= {average relative error},
    width = \imgwidth,
	height = \imgheight,
    cycle list name=yaf,
	tick scale binop=\times,
	scale only axis,
	no markers
    ]
\addplot+[yafcolor5] table[x index = 0, y index = 3, header = false] {results/miniboone.scores.res};
\pgfplotsextra{\yafdrawaxis{0}{0.9}{0}{0.009}}
\end{axis}
\end{tikzpicture}}
\\
\subcaptionbox{\hepmass\label{fig:foo2}}{
\begin{tikzpicture}
\begin{axis}[xlabel={$\epsilon$}, ylabel= {max. relative error},
    width = \imgwidth,
	height = \imgheight,
    cycle list name=yaf,
	tick scale binop=\times,
	yticklabel style={/pgf/number format/fixed,/pgf/number format/precision=2},
	scale only axis,
	no markers
    ]
\addplot+[yafcolor5] table[x index = 0, y index = 4, header = false] {results/hepmass.scores.res};
\pgfplotsextra{\yafdrawaxis{0}{0.9}{0}{0.17}}
\end{axis}
\end{tikzpicture}} &

\subcaptionbox{\tvads\label{fig:foo2}}{
\begin{tikzpicture}
\begin{axis}[xlabel={$\epsilon$}, ylabel= {max. relative error},
    width = \imgwidth,
	height = \imgheight,
    cycle list name=yaf,
	tick scale binop=\times,
	yticklabel style={/pgf/number format/fixed,/pgf/number format/precision=2},
	scale only axis,
	scaled y ticks = false,
	no markers
    ]
\addplot+[yafcolor5] table[x index = 0, y index = 4, header = false] {results/tv.scores.res};
\pgfplotsextra{\yafdrawaxis{0}{0.9}{0}{0.027}}
\end{axis}
\end{tikzpicture}} &

\subcaptionbox{\miniboone\label{fig:foo2}}{
\begin{tikzpicture}
\begin{axis}[xlabel={$\epsilon$}, ylabel= {max. relative error},
    width = \imgwidth,
	height = \imgheight,
    cycle list name=yaf,
	tick scale binop=\times,
	yticklabel style={/pgf/number format/fixed,/pgf/number format/precision=2},
	scale only axis,
	no markers
    ]
\addplot+[yafcolor5] table[x index = 0, y index = 4, header = false] {results/miniboone.scores.res};
\pgfplotsextra{\yafdrawaxis{0}{0.9}{0}{0.15}}
\end{axis}
\end{tikzpicture}}

\end{tabular}

\caption{Actual relative error as a function of $\epsilon$. Top row: average error, bottom row: maximum error. Proposition~\ref{prop:approx}
states that error cannot be larger than $\epsilon / 2$.}
\label{fig:relative}

\end{figure}
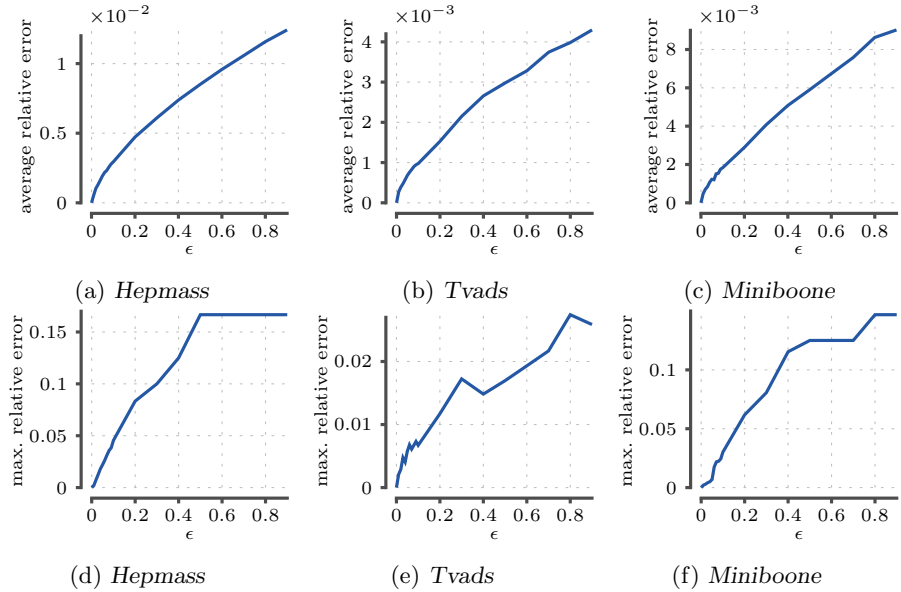

\textbf{Computational cost vs. error}:
Next, we test the trade-off between the computational cost and the relative error.
The top row of Figure~\ref{fig:tradeoff} shows the running time as a function
of the average error, while
the bottom row of Figure~\ref{fig:tradeoff} shows the size of $(1 + \epsilon)$-compressed list as a function
of the average error. Here, we used a window size of $1000$.

From the results, we see the trade-off between the error and the running time:
as the error increases, the running time drops. This is mainly due to the fewer
elements in the compressed list as demonstrated in the bottom row. The running
stabilizes for larger errors; this is due to the operations that do not depend
on $\epsilon$, such as maintaining binary tree $T$. 

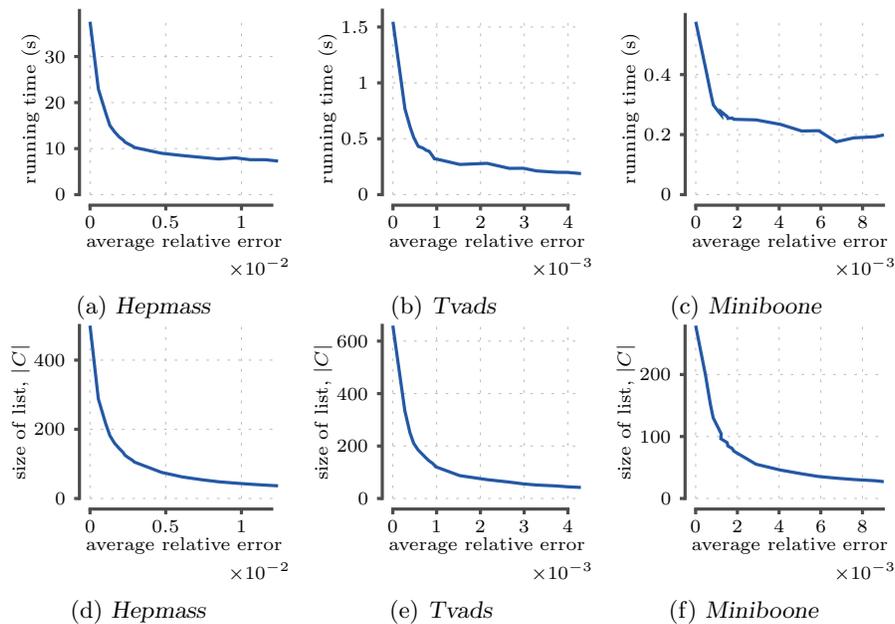
\begin{figure}[ht!]
\setlength{\imgwidth}{2.5cm}
\begin{tabular}{rrr}
\subcaptionbox{\hepmass\label{fig:foo2}}{
\begin{tikzpicture}
\begin{axis}[xlabel={average relative error}, ylabel= {running time (s)},
    width = \imgwidth,
	height = \imgheight,
    cycle list name=yaf,
	tick scale binop=\times,
	yticklabel style={/pgf/number format/fixed,/pgf/number format/precision=2},
	scale only axis,
	ymin = 0,
	no markers
    ]
\addplot+[yafcolor5] table[x index = 3, y expr = {\thisrowno{5} - 5.635}, header = false] {results/hepmass.scores.res};
\pgfplotsextra{\yafdrawaxis{0}{0.012}{0}{40}}
\end{axis}
\end{tikzpicture}} &

\subcaptionbox{\tvads\label{fig:foo2}}{
\begin{tikzpicture}
\begin{axis}[xlabel={average relative error}, ylabel= {running time (s)},
    width = \imgwidth,
	height = \imgheight,
    cycle list name=yaf,
	tick scale binop=\times,
	yticklabel style={/pgf/number format/fixed,/pgf/number format/precision=2},
	scale only axis,
	ymin = 0,
	no markers
    ]
\addplot+[yafcolor5] table[x index = 3, y expr = {\thisrowno{5} - 0.158}, header = false] {results/tv.scores.res};
\pgfplotsextra{\yafdrawaxis{0}{0.004}{0}{1.65}}
\end{axis}
\end{tikzpicture}} &
\subcaptionbox{\miniboone\label{fig:foo2}}{
\begin{tikzpicture}
\begin{axis}[xlabel={average relative error}, ylabel= {running time (s)},
    width = \imgwidth,
	height = \imgheight,
    cycle list name=yaf,
	tick scale binop=\times,
	yticklabel style={/pgf/number format/fixed,/pgf/number format/precision=2},
	scale only axis,
	ymin = 0,
	no markers
    ]
\addplot+[yafcolor5] table[x index = 3, y expr = {\thisrowno{5} - 0.171}, header = false] {results/miniboone.scores.res};
\pgfplotsextra{\yafdrawaxis{0}{0.009}{0}{0.6}}
\end{axis}
\end{tikzpicture}}

\\
\subcaptionbox{\hepmass\label{fig:foo2}}{
\begin{tikzpicture}
\begin{axis}[xlabel={average relative error}, ylabel= {size of list, $\abs{C}$},
    width = \imgwidth,
	height = \imgheight,
    cycle list name=yaf,
	tick scale binop=\times,
	scale only axis,
	ymin = 0,
	no markers
    ]
\addplot+[yafcolor5] table[x index = 3, y index = 1, header = false] {results/hepmass.scores.res};
\pgfplotsextra{\yafdrawaxis{0}{0.012}{0}{500}}
\end{axis}
\end{tikzpicture}} &
\subcaptionbox{\tvads\label{fig:foo2}}{
\begin{tikzpicture}
\begin{axis}[xlabel={average relative error}, ylabel= {size of list, $\abs{C}$},
    width = \imgwidth,
	height = \imgheight,
    cycle list name=yaf,
	tick scale binop=\times,
	scale only axis,
	ymin = 0,
	no markers
    ]
\addplot+[yafcolor5] table[x index = 3, y index = 1, header = false] {results/tv.scores.res};
\pgfplotsextra{\yafdrawaxis{0}{0.004}{0}{660}}
\end{axis}
\end{tikzpicture}} &
\subcaptionbox{\miniboone\label{fig:foo2}}{
\begin{tikzpicture}
\begin{axis}[xlabel={average relative error}, ylabel= {size of list, $\abs{C}$},
    width = \imgwidth,
	height = \imgheight,
    cycle list name=yaf,
	tick scale binop=\times,
	scale only axis,
	ymin = 0,
	no markers
    ]
\addplot+[yafcolor5] table[x index = 3, y index = 1, header = false] {results/miniboone.scores.res};
\pgfplotsextra{\yafdrawaxis{0}{0.009}{0}{278}}
\end{axis}
\end{tikzpicture}}
\end{tabular}

\caption{Top row: running time as a function of average relative error.
Bottom row: size of the compressed list $\abs{C}$ as a function of average relative error.
}
\label{fig:tradeoff}

\end{figure}
{
\setlength{\textfloatsep}{0pt}
\begin{figure}[ht!]
\floatbox[{\capbeside\thisfloatsetup{capbesideposition={left,top},capbesidewidth=5cm}}]{figure}
{\caption{A speed-up of estimating AUC with $\epsilon = 0.1$
against computing AUC exactly, as a function of sliding window size.
The dataset is \miniboone.}\label{fig:wintime}}
{
\begin{tikzpicture}
\begin{axis}[xlabel={window size}, ylabel= {speedup},
    width = 5cm,
	height = \imgheight,
    cycle list name=yaf,
	tick scale binop=\times,
	yticklabel style={/pgf/number format/fixed,/pgf/number format/precision=2},
	scale only axis,
	x tick label style = {/pgf/number format/set thousands separator = {\,}},
	scaled x ticks = false,
	no markers
    ]
\addplot+[yafcolor5] table[x index = 0, y expr = {(\thisrowno{2} - 0.171 ) / (\thisrowno{1} - 0.171 )}, header = false] {results/miniboone.win.res};
\pgfplotsextra{\yafdrawaxis{1000}{10000}{2}{17}}
\end{axis}
\end{tikzpicture}}
\end{figure}
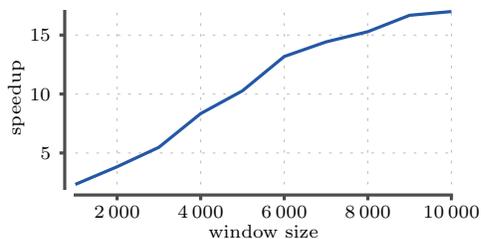}

\textbf{Computational cost vs. window size}:
Computing exact AUC requires $\bigO{k}$ time while estimating AUC is $\bigO{\log k / \epsilon}$.
Consequently, the speed-up should increase as the size of the sliding window increases.
We demonstrate this effect in Figure~\ref{fig:wintime} using the \miniboone dataset.
We see that the speed-up increases as a function of window size: computing estimates using $\epsilon = 0.1$
is 17 times faster for a window size of $10\,000$.

\section{Concluding remarks}\label{sec:conclusions}
In this paper we introduced an approximation scheme that allows to maintain an
estimate AUC in a sliding window within the guaranteed relative error of
$\epsilon / 2$ in $\bigO{(\log k) / \epsilon}$ time.  The key idea behind the
estimator
is to group the data points. The grouping has to be done cleverly so that the
error stays small, the number of groups stay small, and the list can be updated
quickly. We achieve this by maintaining groups, where the number of positive labels
can only increase relatively by $(1 + \epsilon)$ within one group, and must increase
by at least $(1 + \epsilon)$ within two groups.
Our experimental evaluation suggests that the average error in practice is much
smaller than the guaranteed approximation, and that we can achieve significant
speed-up, especially as the window size grows.

Our algorithm relies on the fact that the data points have no weights,
speci\-fically, Lemma~\ref{lem:addcorrpos} relies on the fact that the update may
change the counters only by 1. If the data points are weighted, a different
approach is required: It is possible to construct $(1 + \epsilon)$-list from a
scratch. The key idea here is a new query, where, given a threshold $\sigma$,
we look for a node $v$ that has the largest $\headpos{v}$ such that
$\headpos{v} \leq \sigma$. This query can be done using the same trick as
in \cumstats, and it requires $\bigO{\log k}$ time. The list can be then
constructed by calling this query with exponentially increasing thresholds $\bigO{(\log k) / \epsilon}$ times.
This leads to a running time of $\bigO{(\log^2 k) / \epsilon}$. An interesting direction
for future work is to improve this complexity to, say, $\bigO{(\log k) / \epsilon}$.

\bibliography{references}

\end{document}